\definecolor{mygreen}{rgb}{0,0.2,0}
\definecolor{mygray}{rgb}{0.95,0.95,0.95}
\definecolor{mymauve}{rgb}{0.58,0,0.82}
\definecolor{lbcolor}{rgb}{0.95,0.95,0.95}
\begin{document}

\title{Towards Automated Discovery of \\
Geometrical Theorems in GeoGebra}
\author{Zolt\'an Kov\'acs\orcidID{0000-0003-2512-5793}\inst{1} \and Jonathan H.~Yu\inst{2}}
\institute{
The Private University College of Education of the Diocese of Linz\\
Salesianumweg 3, A-4020 Linz, Austria\\
\email{zoltan@geogebra.org}\and
Gilman School, Baltimore, Maryland, USA\\
\email{jonathanhy314@gmail.com}
}

\maketitle              

\begin{abstract}
We describe a prototype of a new experimental GeoGebra command and tool \texttt{Discover}
that analyzes geometric figures for salient patterns, properties, and theorems.
This tool is a basic implementation of automated discovery in elementary planar geometry.
The paper focuses on the mathematical background of the implementation,
as well as  methods to avoid combinatorial explosion when storing
the interesting properties of a geometric figure.

\keywords{GeoGebra, automated reasoning, combinatorial explosion, equivalence relation}

\end{abstract}
\section{Introduction}

In this technical paper we introduce a new GeoGebra command and tool \texttt{Discover}
that is available in a development GitHub repository \cite{geogebra-discovery}.
This research is closely related to a former
project \cite{ag} (see \cite{adg-ag,aisc-ag,LNAI11110-ag} for further details).

Given a Euclidean geometry construction drawn in GeoGebra, suppose a user
wants to know if a given object $O$ has some ``interesting features,''
such as relevant theorems or properties.
This object can be a point, a line, a circle, or something else, although
in the current implementation $O$ will always be a point. 
Without any further user input,
the \texttt{Discover} command will then analyze $O$
for its interesting and relevant features, and present them to the user
as both a list of formulas and graphics outputs.

\begin{figure}\centering
\includegraphics[scale=0.75]{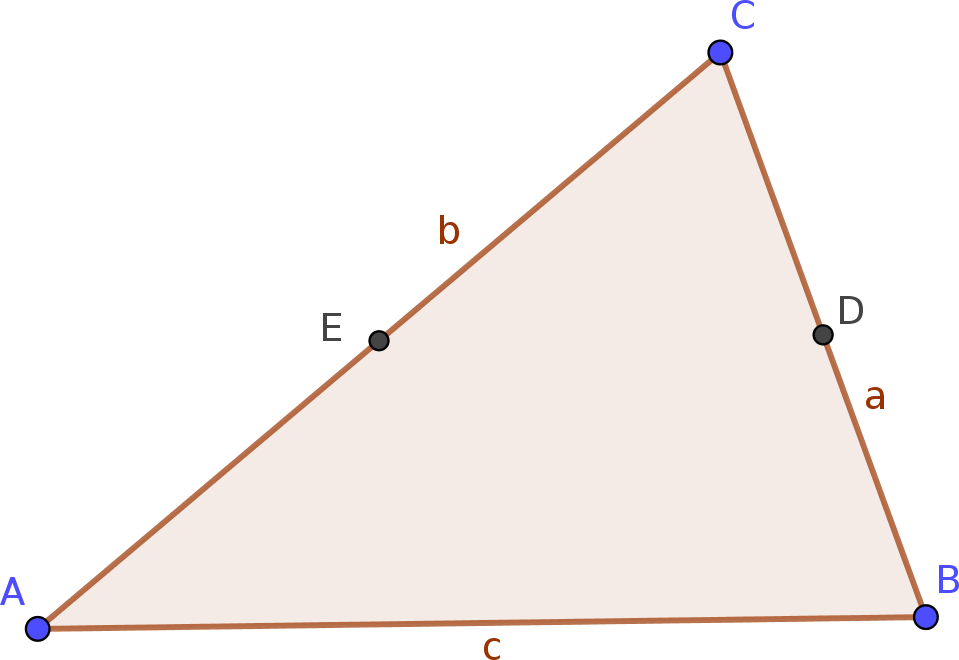}
\caption{Initial setup for a discovery}
\label{midline1}
\end{figure}

For example, let $ABC$ an arbitrary triangle, and let $D$ and $E$ be
the midpoints of $BC$ and $AC$, respectively (Fig.~\ref{midline1}). Has $D$ some interesting features?
Yes: $DE$ is parallel to $AB$, independent of the position of $A$, $B$ and $C$.
Indeed, the command \texttt{Discover($D$)} confirms this observation with
the output shown in Fig.~\ref{rel-midline}; GeoGebra adds lines $DE$ and $AB$
in the same color (Fig.~\ref{midline2}).
(Note, however, that the current implementation of GeoGebra does not report that $2\cdot|DE|=|AB|$.)
Also, the software reports the somewhat trivial finding that the segments $BD$ and $CD$
are congruent, with $BD$ and $CD$ highlighted in the same color.
This output can be
obtained by selecting the Discover tool in GeoGebra's toolbox:
\begin{center}
\includegraphics[height=0.8cm]{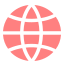}
\end{center}
and then clicking on the point $D$. This functionality is implemented in both GeoGebra Classic 5 and 6,
available as an experimental software package called \textit{GeoGebra Discovery}, at
\url{http://github.com/kovzol/geogebra-discovery}.

\begin{figure}\centering
{\setlength{\fboxsep}{0pt}%
\setlength{\fboxrule}{0.5pt}%
\fbox{\includegraphics[scale=0.4]{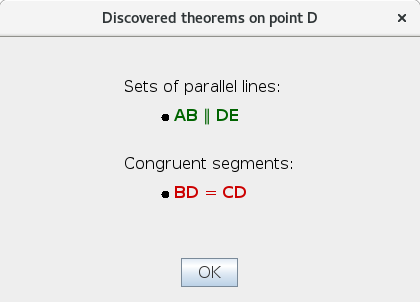}}%
}
\caption{Output window of the \texttt{Discover} command that reports the \textit{Midline theorem}}
\label{rel-midline}
\end{figure}

\begin{figure}\centering
\includegraphics[scale=0.75]{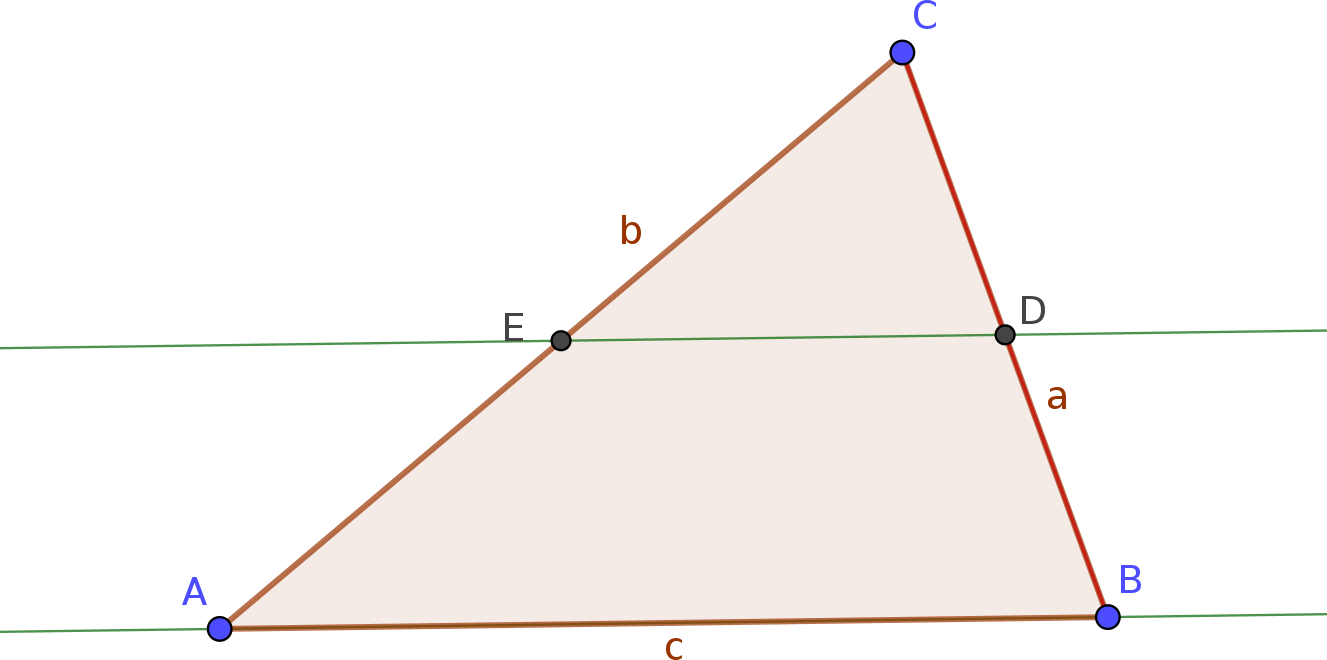}
\caption{Further output of the \texttt{Discover} command}
\label{midline2}
\end{figure}

What strategy is used in the background? First, all points are analyzed to determine whether they are
the same as another point. Then, all possible point triplets are examined for collinearity.
Next, all possible subsets containing four points on the figure are checked
for concyclicity. With knowledge of the collinear points, separate lines can be uniquely defined, in order
find whether they are parallel. Finally, considering the pairs of all possible point
pairs, congruent segments can be identified. This strategy is a result of a combination of numerical
and symbolic processes.

Our second example shows a more complicated setup. A regular hexagon  $ABCDEF$ is given in Fig.~\ref{hexagon1}.
Point $G$ is defined as the intersection of $AD$ and $BE$, and, in addition, $H=BE\cap CF$, $I=AD\cap CF$.
The points $G$, $H$ and $I$ may have trivial differences in their numerical representations, but
in the geometrical sense they should be equal. In the figure rounding was set to 13 digits to emphasize that
GeoGebra computes objects numerically by default. Note that while the $y$-coordinates of $H$ and $I$ numerically differ, the final calculations to prove that they are identical will be symbolic and exact.

\begin{figure}\centering
{\setlength{\fboxsep}{0pt}%
\setlength{\fboxrule}{0.5pt}%
\fbox{\includegraphics[scale=0.4]{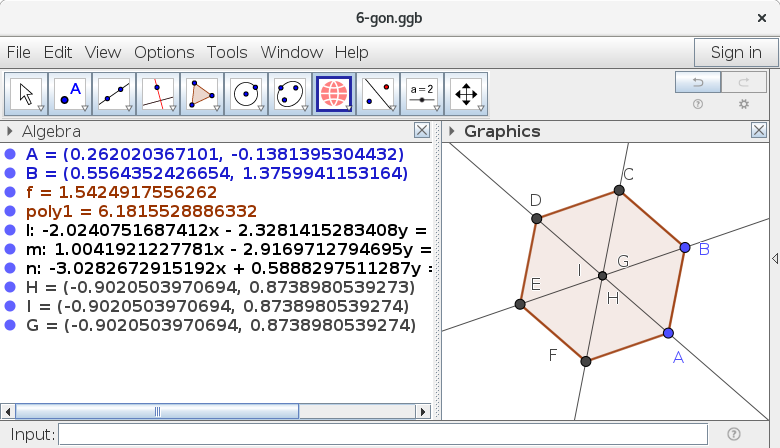}}%
}
\caption{Initial setup for another discovery}
\label{hexagon1}
\end{figure}

Now we are about to learn if point $F$ has some interesting features, so the command
\texttt{Discover($F$)} will be issued. GeoGebra reports a set of properties in a message box
(Fig.~\ref{rel-hexagon}) and adds some additional outputs to the initial setup (Fig.~\ref{hexagon2}).

\begin{figure}\centering
{\setlength{\fboxsep}{0pt}%
\setlength{\fboxrule}{0.5pt}%
\fbox{\includegraphics[scale=0.4]{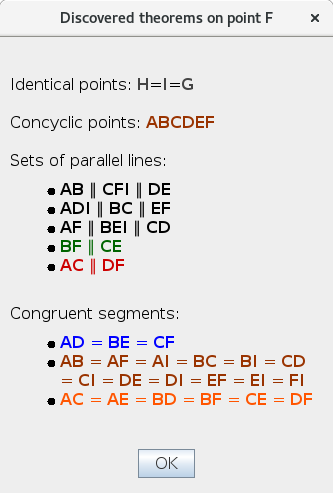}}%
}
\caption{Output window that reports several theorems related to point $F$}
\label{rel-hexagon}
\end{figure}

\begin{figure}\centering
\includegraphics[scale=0.75]{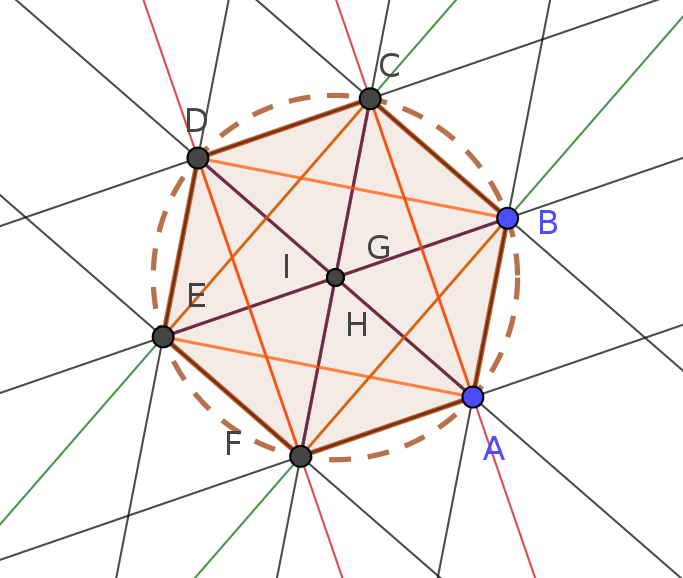}
\caption{Further output of discovery}
\label{hexagon2}
\end{figure}

Here, we see that concyclic points are reported as a single item and not as separate data.
Also, parallel lines are classified into five different sets.
Finally, there are three sets of congruent segments.
This approach in computation and reporting helps avoid combinatorial explosion.

\section{Mathematical background}\label{sec2}

The above mentioned strategies have some similiarities to the ones introduced in \cite{song}, but here we
focus on minimizing the number of objects that have to be compared in the process
that practically compares all objects with all other objects.

Our current implementation deals with \textit{points}, \textit{lines},
\textit{circles} and \textit{parallel lines} (or \textit{directions}) and \textit{congruent segments}.

A \textit{geometric point} $P$ is a GeoGebra object, described by the \texttt{GeoPoint} class
(see GeoGebra's source code at \url{github.com/geogebra/geogebra} for more details). While we will not provide
a detailed definition of a geometric point, generally speaking it is an object with a very complex structure
containing two real coordinates, several style settings (including size and color, for example)
and other technical details that are used in the application. Some geometric points are
dependent of other geometric points or other geometric objects---this hierarchy is stored in the set of \texttt{GeoPoint}s, too.

Independent of the detailed definition of a geometric point,
we can still define the notion of \textit{point} in our context.

\begin{definition} A set of geometric points ${\cal P}=\{P_1,P_2,\ldots,P_n\}$ is called a point if for all
different $P,Q\in {\cal P}$ the points $P$ and $Q$ are identical in general.
\end{definition}

Henceforth, unless otherwise mentioned, we will consider points according to the definition above,
not as geometric points.

Here, we do not precisely define when two points are identical \textit{in general}.
Instead, we will illustrate the concept of point identicality with the following example.
Consider geometric points $P_1$, $P_2$,
$P_3$ and $P_4$ that form a parallelogram. Now define $P_5$ and $P_6$ as the midpoint of $P_1$ and $P_3$, and $P_2$ and $P_4$,
respectively. This setting implies that $P_5$ and $P_6$ are identical, because the diagonals of a parallelogram
always bisect each other. In a dynamic geometry setting like GeoGebra, this simply means that by changing some points
of the set $\{P_1,P_2,P_3,P_4\}$, the points $P_5$ and $P_6$ will still share the same position in the plane. (See
Fig.~\ref{parallelogram1}. Here the construction is controlled by the points $P_1$, $P_2$ and $P_3$ only:
they can be freely chosen, and based on them, the point $P_4$ is already dependent and uniquely defined as
the intersection of the two parallel lines to $P_1P_2$ and $P_2P_3$, respectively, through $P_3$ and $P_1$.)
\begin{figure}\centering
\includegraphics[scale=0.75]{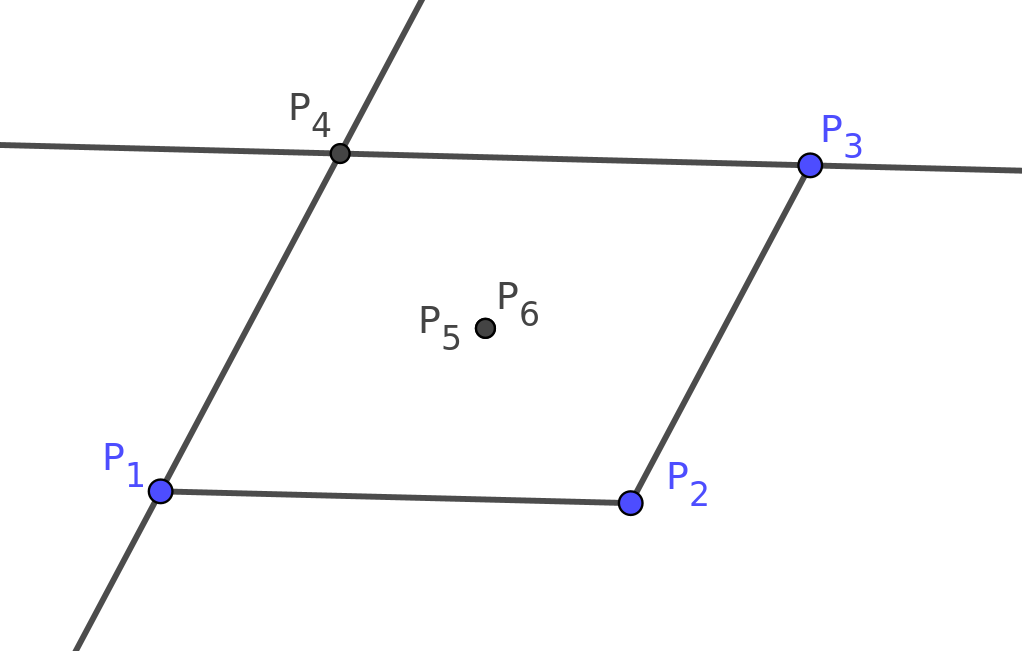}
\caption{Points $P_5$ and $P_6$ are defined as midpoints of opposite vertices of parallelogram $P_1P_2P_3P_4$}
\label{parallelogram1}
\end{figure}

In fact, general truth includes statements that are not always true, but just ``in most cases''---here we can think
of some degeneracies that can occur in some constructions when some objects are degenerate.
For example, altitudes of a triangle generally meet at a point---but not always, since a degenerate triangle
``usually'' has three parallel ``altitudes''; unless two (or even three!) vertices of the triangle coincide.
(See \cite{Chou_1987} for more details on the concept of general truth and degeneracies.)

\begin{definition} A set of points $\ell=\{P_1,P_2,\ldots,P_n\}$ is called a line if for all
different $P,Q,R\in \ell$ the points $P$, $Q$ and $R$ are collinear in general.
\end{definition}


For example, the set $\ell=\{C,F,G\}$ in Fig.~\ref{hexagon2} forms a line.

\begin{definition} A set of points ${\cal C}=\{P_1,P_2,P_3,\ldots,P_n\}$ is called a circle if for all
different $P,Q,R,S\in {\cal C}$ the points $P$, $Q$, $R$ and $S$ are concyclic in general.
\end{definition}

\begin{definition} A set of lines $\vec{D}=\{\ell_1,\ldots,\ell_n\}$ is called parallel lines (or a direction) if for all
different $\ell,m\in \vec{D}$ the lines $\ell$ and $m$ are parallel in general.
\end{definition}

\begin{definition} A set $\overline{s}=\{P,Q\}$ of two points is called a segment.
\end{definition}

\begin{definition} A set of segments $s=\{\overline{s_1},\ldots,\overline{s_n}\}$ is called equal length segments (or congruent segments) if for all
different $\overline{s_1},\overline{s_2}\in s$ the segments $\overline{s_1}$ and $\overline{s_1}$ are equally long in general.
\end{definition}

In fact, GeoGebra Discovery uses
a more general concept of being identical: it allows two points (or two objects) to have a kind of
relationship also if it is true just \textit{on parts} (see \cite{rmc-top} for more details).

The main idea of storing the objects is that points, lines, circles, directions and equally long segments designate equivalence classes, that is:

\begin{theorem}Let $\ell$ and $m$ be lines. Then, for all different points $P,Q,R\in m$,
if $\{P, Q\}\subset \ell$, then $R\in \ell$; that is, $\ell = m$.
\end{theorem}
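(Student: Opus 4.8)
The plan is to reduce the statement to the classical incidence fact that two distinct points of the Euclidean plane lie on a unique straight line, and then to transport this fact through the ``in general'' qualifier that appears in the definition of a line.

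First I would note that, by the definition of a line, any three distinct points of $m$ are collinear in general, and likewise for $\ell$. Since $P$ and $Q$ are different points, they are not identical in general, so for a generic choice of the free parameters of the construction their plane positions are distinct and determine a unique classical line $g$. Because every point of $\ell$ is collinear in general with $P$ and $Q$, which both lie in $\ell$, each such point lies on $g$; thus the line object $\ell$ is carried by $g$. As $P,Q\in m$ as well, the identical argument applied inside $m$ shows that $m$ is carried by the same $g$.

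Next I would take an arbitrary third point $R\in m$. Since $P,Q,R\in m$ are distinct, they are collinear in general, so $R$ lies on $g$, and hence $R$ is collinear in general with every pair of points of $\ell$. Here I would invoke that, by the surrounding discussion, a line is an \emph{equivalence class}, i.e. the maximal set of construction points lying (in general) on its underlying classical line; this maximality is exactly what upgrades ``$R$ is collinear with the points of $\ell$'' to ``$R\in\ell$.'' Running the same argument with the roles of $\ell$ and $m$ exchanged yields $\ell\subseteq m$ as well, and the two inclusions give $\ell=m$.

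The main obstacle I anticipate is not the incidence geometry, which is elementary, but the careful treatment of the phrase ``in general.'' Collinearity is a ternary relation rather than a binary one, so the equivalence-class structure of lines is not automatic: one must check that ``collinear in general'' is transitive in the precise sense that, once the two distinct points $P,Q$ pin down the generic line $g$, every further collinear triple is forced to respect $g$ outside a lower-dimensional set of degenerate parameter values. This is where the uniqueness of $g$ through $P$ and $Q$ does the real work, excluding the degenerate configurations (for instance $P$ and $Q$ momentarily coinciding) in which a spurious second line through the shared points could otherwise arise; the general-truth framework of \cite{Chou_1987} is what licenses discarding those cases.
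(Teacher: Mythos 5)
Your proof is correct and follows essentially the same route as the paper, whose entire argument is the single observation that in Euclidean geometry two distinct points designate a unique line. The additional care you take---noting that the set-theoretic conclusion $R\in\ell$ really needs lines to be read as \emph{maximal} sets (equivalence classes), which the paper's definition leaves implicit and its one-line proof glosses over---is a refinement of detail rather than a different approach.
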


\begin{proof}
In Euclidean geometry two points always designate a unique line.
\end{proof}

\begin{theorem}Let ${\cal C}$ and ${\cal D}$ be circles. Then, for all different points 
$P,Q,R,S\in{\cal D}$, if $\{P,Q,R\}\subset {\cal C}$, then $S\in{\cal C}$; that is, ${\cal C}={\cal D}$.
\end{theorem}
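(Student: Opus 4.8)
The plan is to mirror the proof of the preceding theorem about lines, replacing the ``two points determine a line'' fact with its analogue for circles. First I would recall the classical fact that in Euclidean geometry, three distinct non-collinear points determine a unique circle passing through them; this is the geometric input that drives the whole argument, just as ``two points determine a unique line'' drove the previous theorem.

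Given the hypotheses, I would argue as follows. We have distinct points $P,Q,R\in{\cal D}$ with $\{P,Q,R\}\subset{\cal C}$, and a fourth point $S\in{\cal D}$. Since ${\cal D}$ is a circle in the sense of the definition, any four distinct members of ${\cal D}$ are concyclic in general, so in particular $P,Q,R$ are themselves concyclic, which forces them to be non-collinear in general. By the classical fact, $P,Q,R$ therefore lie on a unique circle; since both ${\cal C}$ and ${\cal D}$ are circles through all three of them, the underlying geometric circle is the same. Now $S\in{\cal D}$ is concyclic with $P,Q,R$ (again by the circle property applied to the quadruple $\{P,Q,R,S\}\subset{\cal D}$), and the unique circle through $P,Q,R$ is the one underlying ${\cal C}$; hence $S$ lies on that circle, i.e.\ $S\in{\cal C}$. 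This gives ${\cal D}\subseteq{\cal C}$, and by the symmetric argument (using three points of ${\cal C}$ that also lie in ${\cal D}$) one obtains ${\cal C}={\cal D}$.

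The main obstacle I anticipate is handling the phrase \emph{in general} rigorously, since the definitions of \emph{circle} and of \emph{concyclic} both carry this qualifier and the excerpt deliberately does not pin it down precisely. The subtle point is that three points being ``concyclic in general'' must be read as excluding the degenerate collinear configuration, so that the uniqueness of the circumscribed circle actually applies; the symmetry step also implicitly needs that ${\cal C}$ and ${\cal D}$ share at least three points, which is available because $\{P,Q,R\}$ lies in both. In keeping with the paper's informal treatment of general truth and degeneracies (cf.\ the altitude example and \cite{Chou_1987}), I would expect the intended proof to be short and to lean entirely on the uniqueness of the circle through three non-collinear points, rather than to formalize the degeneracy bookkeeping.
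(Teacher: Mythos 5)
Your proposal is correct and rests on exactly the same key fact as the paper's own (one-line) proof: in Euclidean geometry, three non-collinear points determine a unique circle. The paper simply states this fact and stops, whereas you additionally spell out the non-collinearity/degeneracy bookkeeping and the set-equality step, which the paper leaves implicit under its informal ``in general'' convention.
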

\begin{proof}
In Euclidean geometry three non-collinear points always designate a unique circle.
\end{proof}

\begin{theorem}Let $\vec{D}$ and $\vec{E}$ be directions. Let $\ell\in \vec{D}$ and $m\in\vec{E}$. If $\ell\parallel m$
in general, then $\vec{D}=\vec{E}$.
\end{theorem}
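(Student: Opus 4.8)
The plan is to reduce the statement to the transitivity of the ``parallel in general'' relation, in exact analogy with the two preceding theorems, where the uniqueness of the line through two points, and of the circle through three non-collinear points, did all the work. First I would unwind the definition of a direction on each side. Since $\vec{D}$ is a direction, every line $n\in\vec{D}$ is parallel in general to the chosen $\ell$; since $\vec{E}$ is a direction, every line $n'\in\vec{E}$ is parallel in general to the chosen $m$. The hypothesis supplies the single cross-relation $\ell\parallel m$ in general, which is the bridge connecting the two families.

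Next I would invoke the classical fact that parallelism of lines in the Euclidean plane is an equivalence relation --- reflexive, symmetric, and, crucially, transitive: two lines parallel to a common third line are parallel to each other. Chaining $n\parallel\ell$, $\ell\parallel m$ and $m\parallel n'$ then yields $n\parallel n'$ for all $n\in\vec{D}$ and $n'\in\vec{E}$. Consequently $\vec{D}\cup\vec{E}$ is itself a set of pairwise-parallel lines, i.e.\ again a direction; and since a line determines its pencil of parallels uniquely, $\vec{D}$ and $\vec{E}$ name one and the same equivalence class, giving $\vec{D}=\vec{E}$. This is precisely the ``well-defined equivalence class'' phenomenon announced before the theorem block.

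The main obstacle is not the Euclidean transitivity, which is elementary, but the qualifier \emph{in general}. Each individual parallelism is allowed to fail on a degenerate subfamily of configurations (for instance where a defining line collapses, or two governing points coincide), so I must check that the transitively composed relation $n\parallel n'$ still holds away from a negligible exceptional set rather than only pointwise. The clean way to handle this is to observe that each ``in general'' exclusion is a proper algebraic condition on the free parameters of the construction; only finitely many such conditions enter the chain, their union is again a proper lower-dimensional exceptional set, and outside it all three parallelisms --- hence the derived one --- hold simultaneously. I would therefore make explicit that ``parallel in general'' is stable under the finite chaining used above, which is exactly the property that makes directions legitimate equivalence classes and underwrites the conclusion $\vec{D}=\vec{E}$.
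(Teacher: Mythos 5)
Your proposal is correct and takes essentially the same approach as the paper, whose entire proof is the single sentence that the claim ``follows immediately from the transitive property of parallelism.'' Your extra care about the \emph{in general} qualifier---observing that the finitely many proper degeneracy conditions entering the chain still union to a proper exceptional set, so the composed parallelism holds generically---is a sound elaboration of a point the paper leaves implicit.
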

\begin{proof}
This follows immediately from the transitive property of parallelism.
\end{proof}

\begin{theorem}Let $s$ and $t$ be segments. Let $\overline{u}\in s$ and $\overline{v}\in t$. If $|\overline{u}|=|\overline{v}|$
in general, then $s_1=s_2$.
\end{theorem}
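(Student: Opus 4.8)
The plan is to mirror the proof of the preceding theorem on directions, since the conclusion (which should read $s=t$, in analogy with the three earlier statements) rests on the same structural fact: ``being equally long in general'' is an equivalence relation on segments, and $s$ and $t$ are by construction its equivalence classes. The only relation-specific ingredient needed is transitivity, so I would isolate that first.

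First I would record that equality of lengths is reflexive, symmetric and transitive when read pointwise: for fixed positions of the free points, $|\overline{a}|=|\overline{b}|$ and $|\overline{b}|=|\overline{c}|$ force $|\overline{a}|=|\overline{c}|$, simply because numerical equality is transitive. The subtlety is to lift this to the ``in general'' reading. Here I would argue that the locus where $|\overline{a}|=|\overline{b}|$ holds is the complement of a proper algebraic subset of the parameter space---which is precisely what ``in general'' encodes, following \cite{Chou_1987}---and likewise for $|\overline{b}|=|\overline{c}|$. The intersection of two such generic loci is again the complement of a proper subset, hence still generic, and on it $|\overline{a}|=|\overline{c}|$ holds. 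Therefore $|\overline{a}|=|\overline{c}|$ in general, establishing transitivity for the relation as stated.

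With transitivity in hand, I would take arbitrary $\overline{w}\in s$ and $\overline{x}\in t$. By the defining property of the congruent-segment class $s$ we have $|\overline{w}|=|\overline{u}|$ in general, by hypothesis $|\overline{u}|=|\overline{v}|$ in general, and by the defining property of $t$ we have $|\overline{v}|=|\overline{x}|$ in general; chaining these three with transitivity yields $|\overline{w}|=|\overline{x}|$ in general. Thus every segment of $s$ is equally long with every segment of $t$, so $s\cup t$ is itself a set of mutually congruent segments.

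Finally I would invoke the convention, stated just before these theorems, that a congruent-segment object denotes an equivalence class, i.e.\ a maximal such set. Since $s\cup t$ is a set of mutually congruent segments containing both the maximal class $s$ and the maximal class $t$, maximality forces $s=s\cup t=t$. I expect the main obstacle to be precisely this last point together with the genericity argument in the second step: transitivity of plain equality is immediate, but justifying that it survives the ``in general'' qualifier, and that the resulting overlap suffices to identify two \emph{maximal} classes, is where the actual content lies.
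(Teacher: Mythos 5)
Your proof is correct and follows essentially the same route as the paper, whose entire proof is the single sentence that the claim ``is an immediate consequence of the transitive property of equality of lengths.'' Your two elaborations---checking that transitivity survives the ``in general'' qualifier via intersection of generic loci, and invoking maximality of the classes so that the overlapping classes $s$ and $t$ must coincide---only make explicit what the paper's one-line argument leaves implicit (and the maximality reading is indeed needed, since with the conclusion written as $s=t$ rather than the paper's typo $s_1=s_2$, the statement would be false for non-maximal sets of pairwise congruent segments).
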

\begin{proof}
This is an immediate consequence of the transitive property of equality of lengths.
\end{proof}

By using these theorems we can maintain a minimal set of objects during discovery.

Fig.~\ref{hexagon-pool} shows the objects identified during the
command \texttt{Discover($F$)} for the input in Fig.~\ref{hexagon1}. The set of lines are not listed in the figure
separately, but as a single entry at the bottom list of equally long segments.
Also, some of the outputs are not particularly interesting features, such as lines with only point, circles with only
three points, directions with only one line, or isolated equal length segments.
\begin{figure}\centering
\includegraphics[scale=0.4]{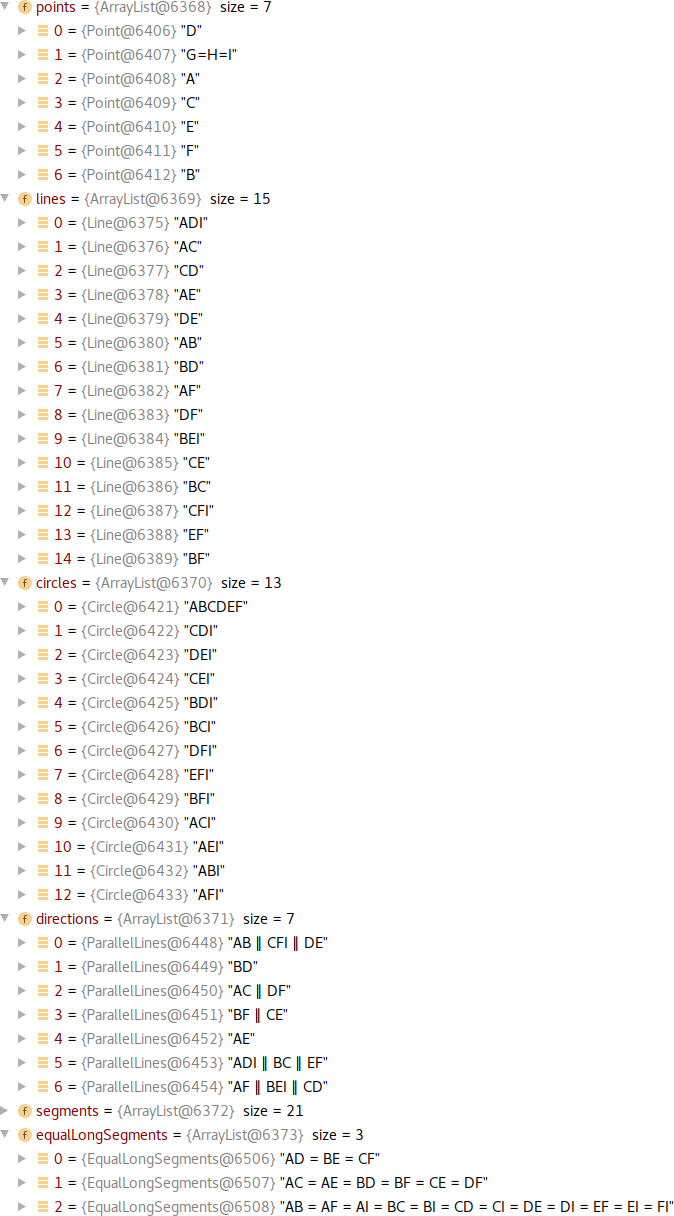}
\caption{The list of objects as shown in IntelliJ IDEA, a popular integrated development environment for Java}
\label{hexagon-pool}
\end{figure}

%
%
%

\section{Examples of Discover with selected theorems}

GeoGebra is a well-known and widely used software tool in education, with meaningful potential for using
geometric discovery and exploration to teach elementary geometry. Even so, the range of mathematical knowledge
is broad, including secondary school topics, international math competitions, and higher level mathematics.
Below we examine selected theorems confirmed in the current implementation of Discover.

\subsection{The diagonals of a parallelogram bisect each other}
We already mentioned this simple theorem. The problem is shown in Fig.~\ref{parallelogram1}.
With discovery on point $P_5$, the applicable theorems are reported in Fig.~\ref{parallelogram2}.
\begin{figure}\centering
\includegraphics[scale=0.75]{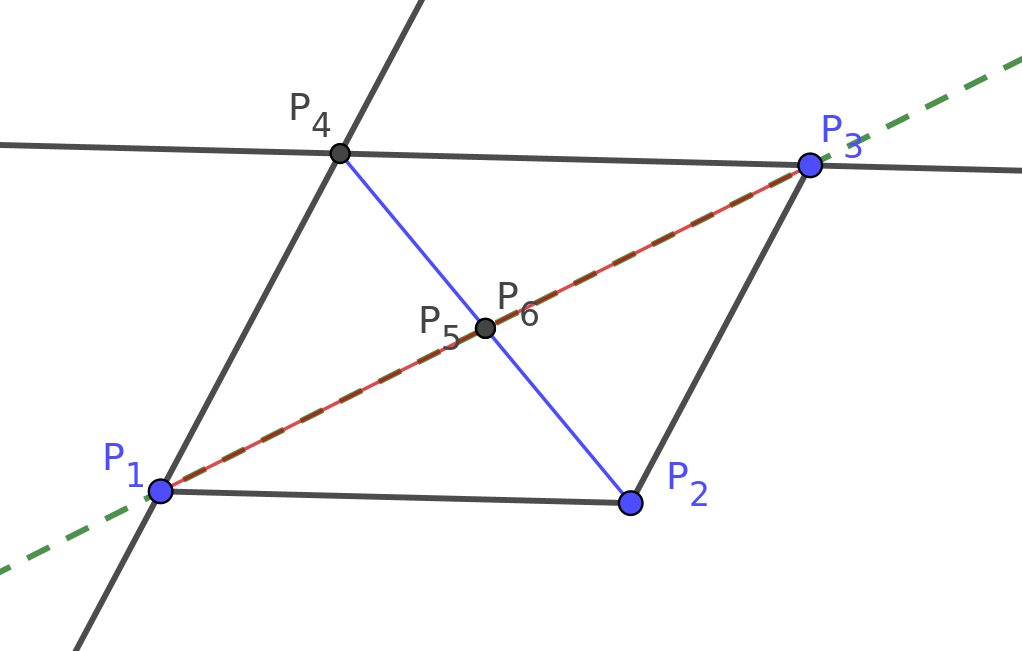}
{\setlength{\fboxsep}{0pt}%
\setlength{\fboxrule}{0.5pt}%
\fbox{\includegraphics[scale=0.4]{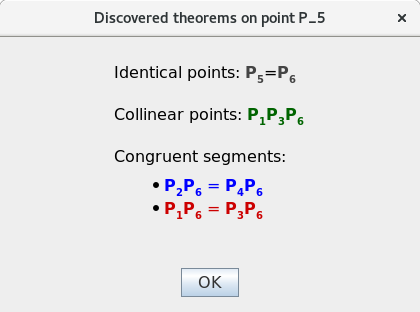}}%
}
\caption{Output of the command \texttt{Discover($P_5$)}}
\label{parallelogram2}
\end{figure}

\subsection{Euler line}

The Euler line is a
line determined from any triangle that is not regular. It
passes through the orthocenter, the circumcenter and the centroid.
The problem is shown in Fig.~\ref{EulerLine1}.
\begin{figure}\centering
\includegraphics[scale=0.75]{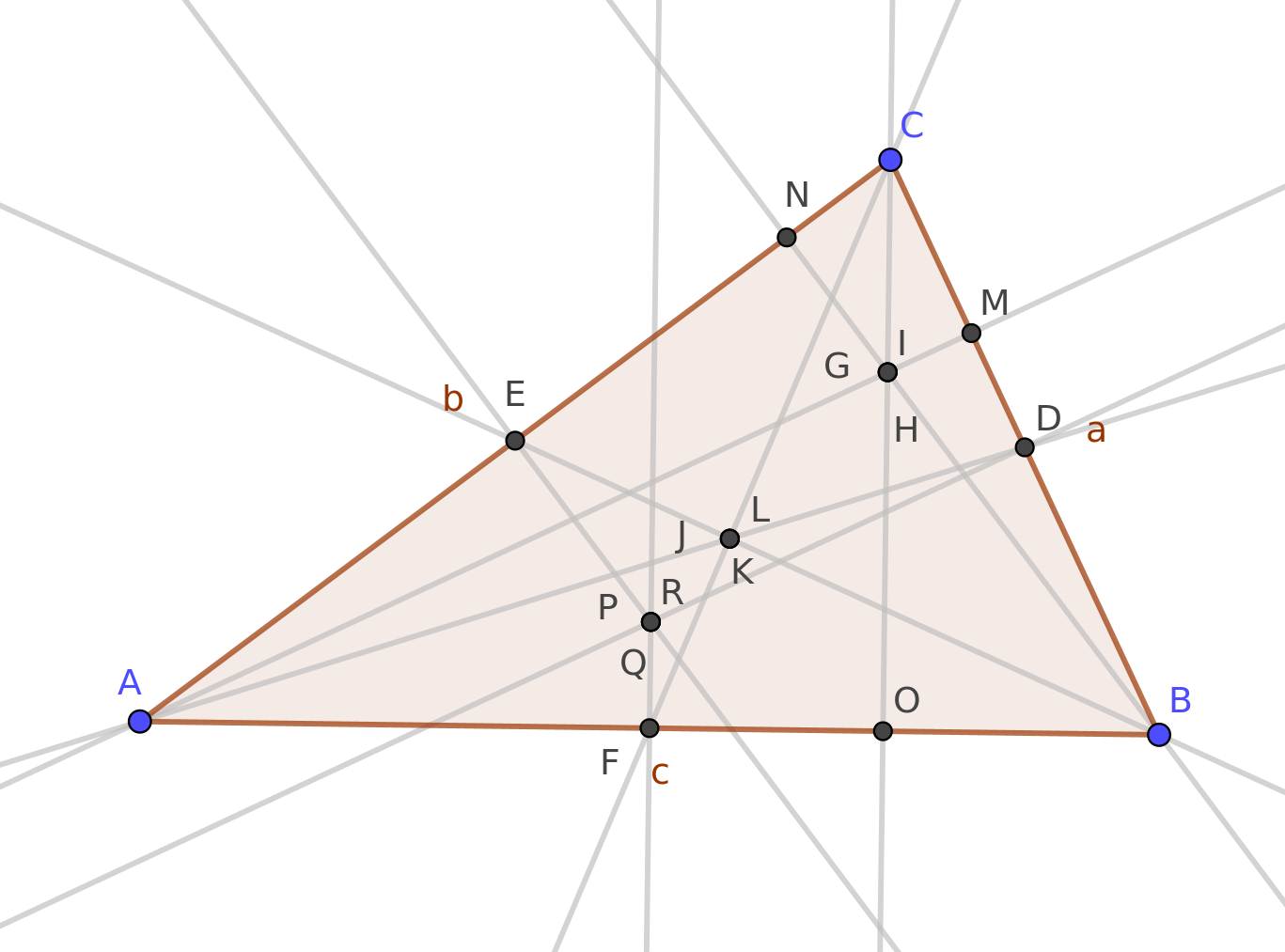}
\caption{Euler line}
\label{EulerLine1}
\end{figure}
With discovery on point $P$, the relevant theorems are listed in Fig.~\ref{EulerLine2}.
\begin{figure}\centering
\includegraphics[scale=0.75]{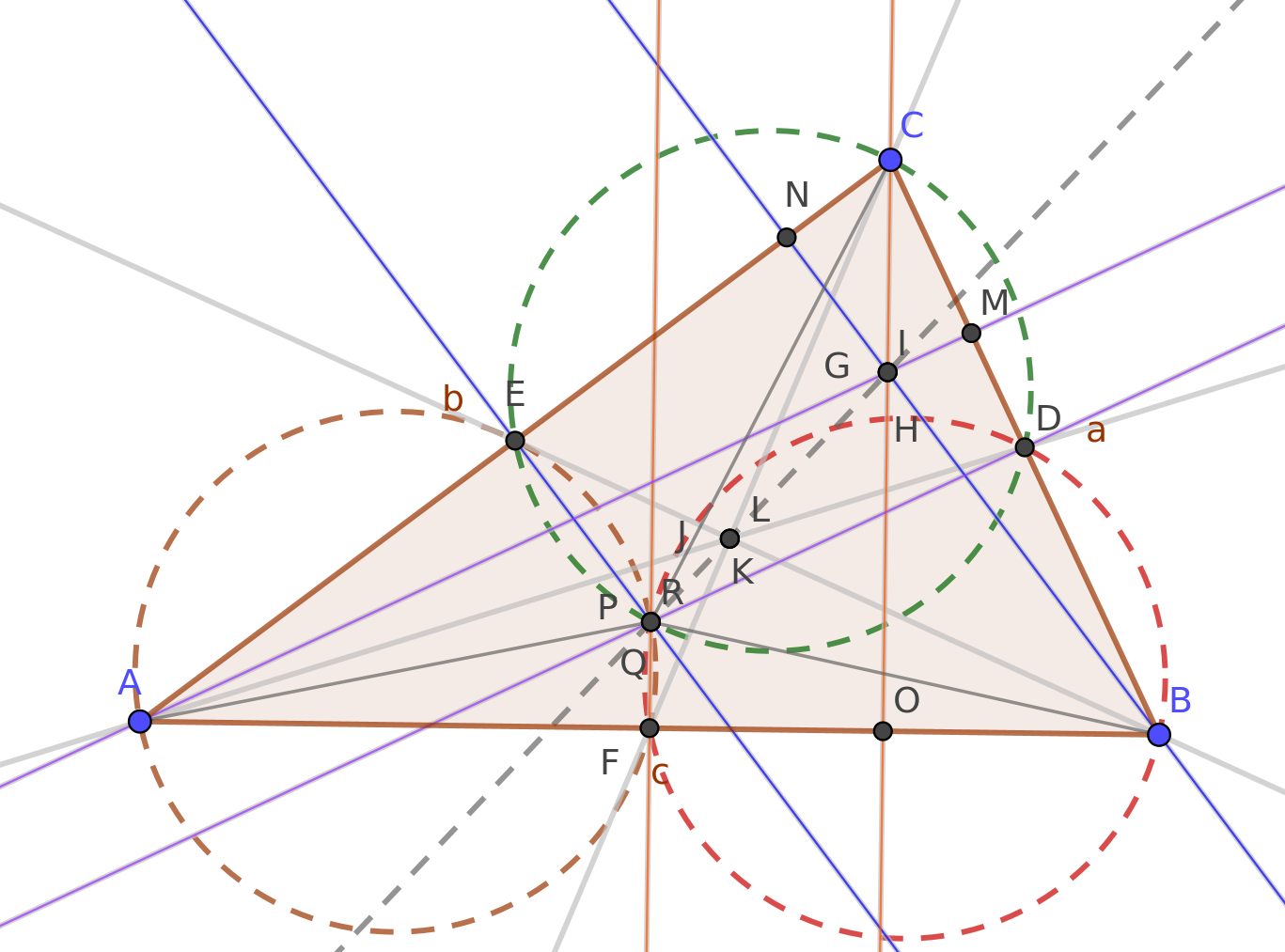}
{\setlength{\fboxsep}{0pt}%
\setlength{\fboxrule}{0.5pt}%
\fbox{\includegraphics[scale=0.4]{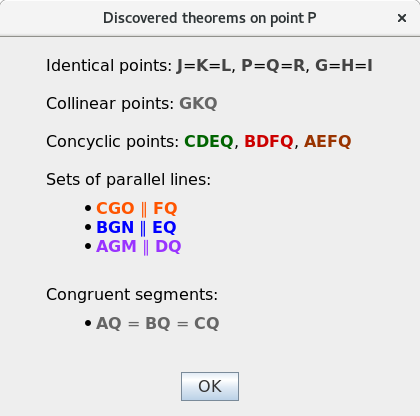}}%
}
\caption{Output of the command \texttt{Discover($P$)}}
\label{EulerLine2}
\end{figure}
The Euler line theorem implicitly includes several simple theorems, including concurrency of the medians
of a triangle ($J=K=L$, the generated points being the pairwise intersections of the medians),
concurrency of the altitudes ($G=H=I$, these points being the pairwise intersections of the altitudes),
and concurrency of the perpendicular bisectors of the altitudes ($P=Q=R$, pairwise intersections as above).

\subsection{Nine-point circle}

The nine-point circle passes through nine significant
points of an arbitrary triangle, namely:
\begin{itemize}
\item the midpoint of each side of the triangle,
\item the foot point of each altitude,
\item the midpoint of the line segment from each vertex of the triangle to the orthocenter.
\end{itemize}
The problem setting is shown in Fig.~\ref{9pc1}.
\begin{figure}\centering
\includegraphics[scale=0.75]{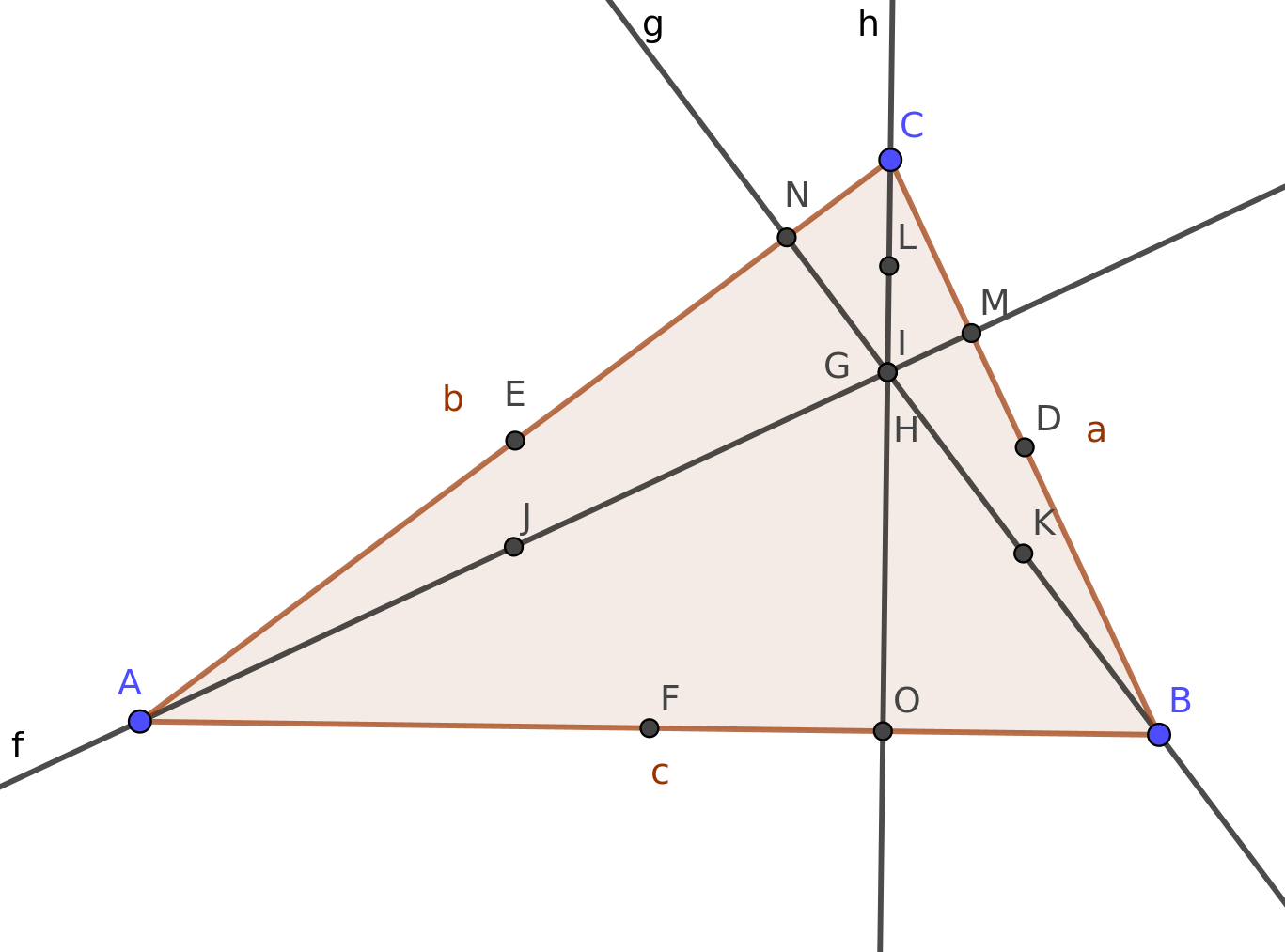}
\caption{Problem setting for the nine-point circle}
\label{9pc1}
\end{figure}
With discovery on point $D$, the appropriate theorems are reported in Fig.~\ref{9pc2}.
\begin{figure}\centering
\includegraphics[scale=0.75]{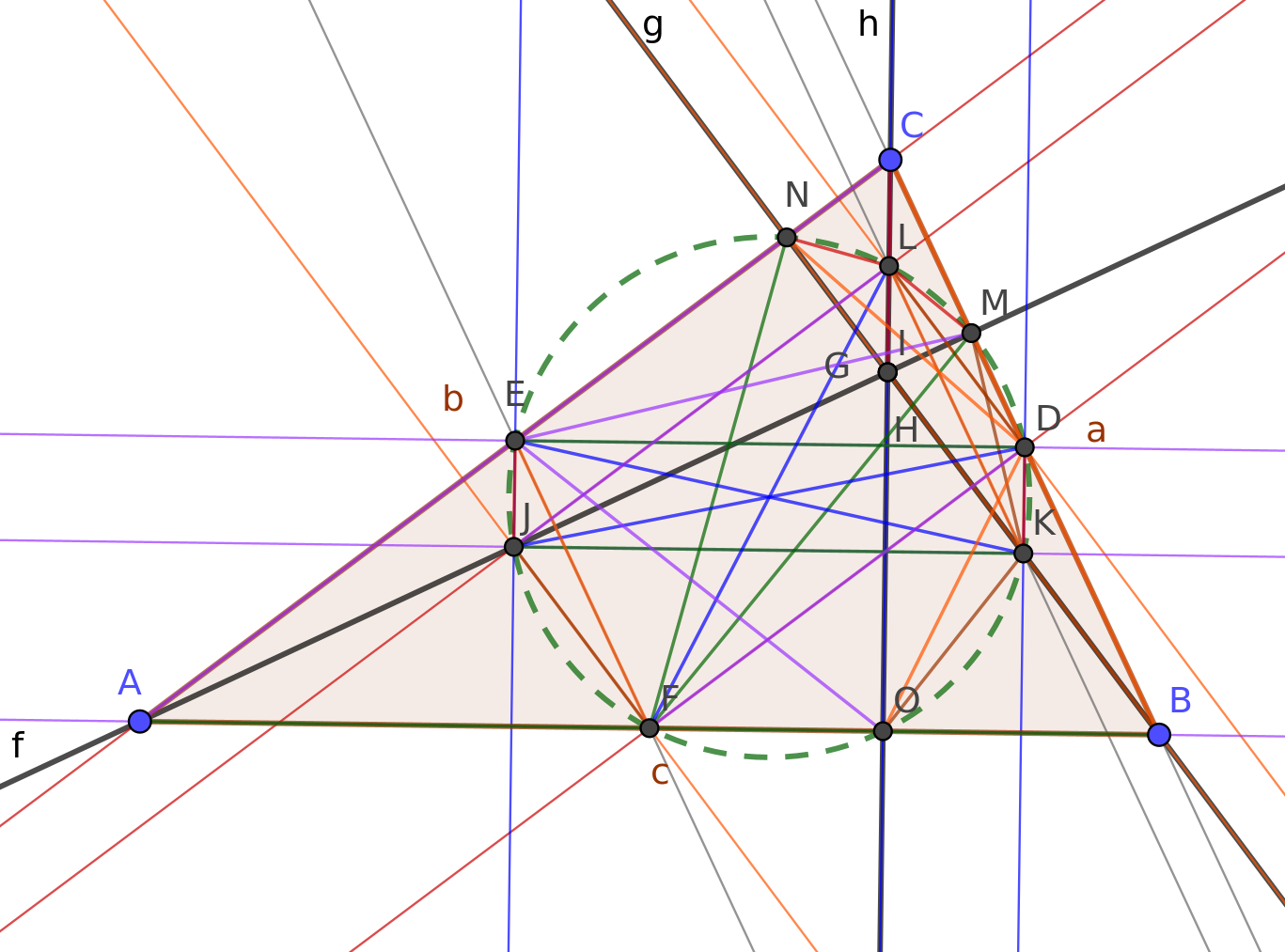}
{\setlength{\fboxsep}{0pt}%
\setlength{\fboxrule}{0.5pt}%
\fbox{\includegraphics[scale=0.4]{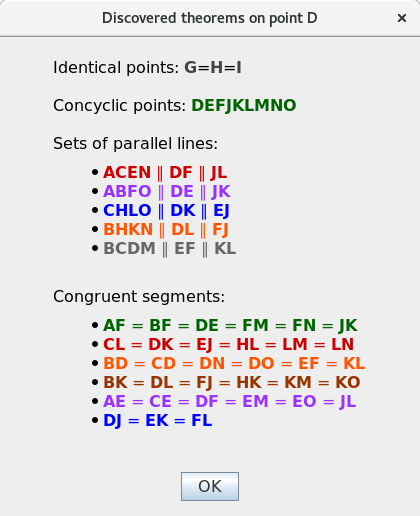}}%
}
\caption{Output of the command \texttt{Discover($D$)}}
\label{9pc2}
\end{figure}
The nine-point circle theorem implicitly includes several other simple theorems. In addition, the graphical result
suggests further theorems: segments $DJ$, $EK$, and $FL$ are congruent and concurrent;
these three segments are also diameters of the nine-point circle; and
their intersection designates the center of the nine-point circle. 
By using another discovery this can be confirmed.

\subsection{A contest problem}

In 2010, at the 51st International Mathematics Olympiad in Astana, Kazakhstan, the following shortlisted
problem was proposed by United Kingdom:
\begin{quotation}
Let $ABC$ be an acute triangle with $D$, $E$, $F$ the feet of the altitudes lying on $BC$, $CA$, $AB$
respectively. One of the intersection points of the line $EF$ and the circumcircle is $P$. The lines
$BP$ and $DF$ meet at point $Q$. Prove that $AP=AQ$.
\end{quotation}
After constructing the according figure with GeoGebra Discovery (Fig.~\ref{contest1}),
\begin{figure}\centering
\includegraphics[scale=0.66]{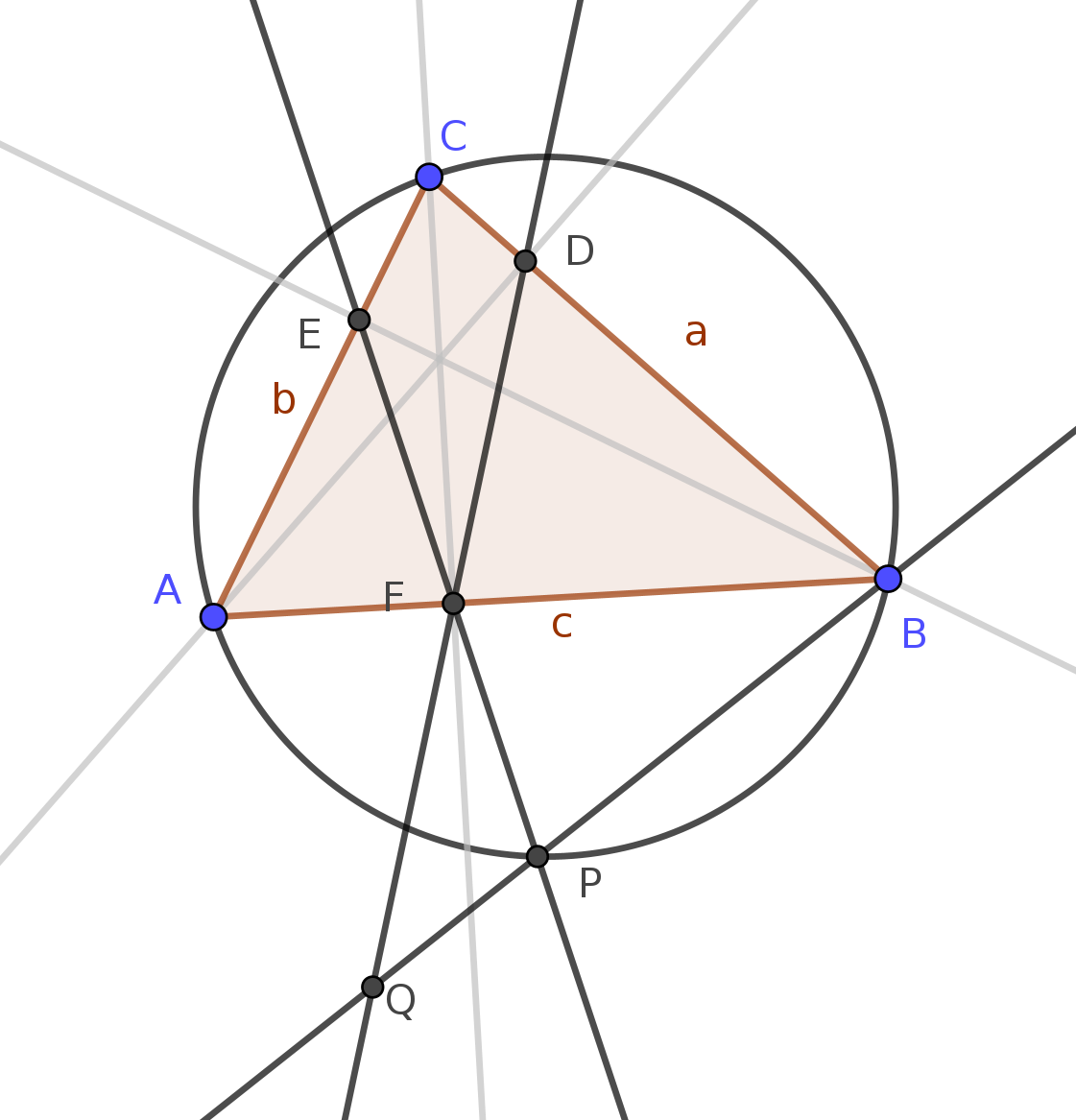}
\caption{Problem setting for a shortlisted contest problem at IMO 2010}
\label{contest1}
\end{figure}
we start discovery on point $Q$. The discovered theorems appear in Fig.~\ref{contest2}.
We learn a few unexpected properties: $DP\parallel EQ$, and the concyclicity of points $C$, $D$, $P$, $Q$,
and $A$, $F$, $P$, $Q$.
\begin{figure}\centering
\includegraphics[scale=0.66]{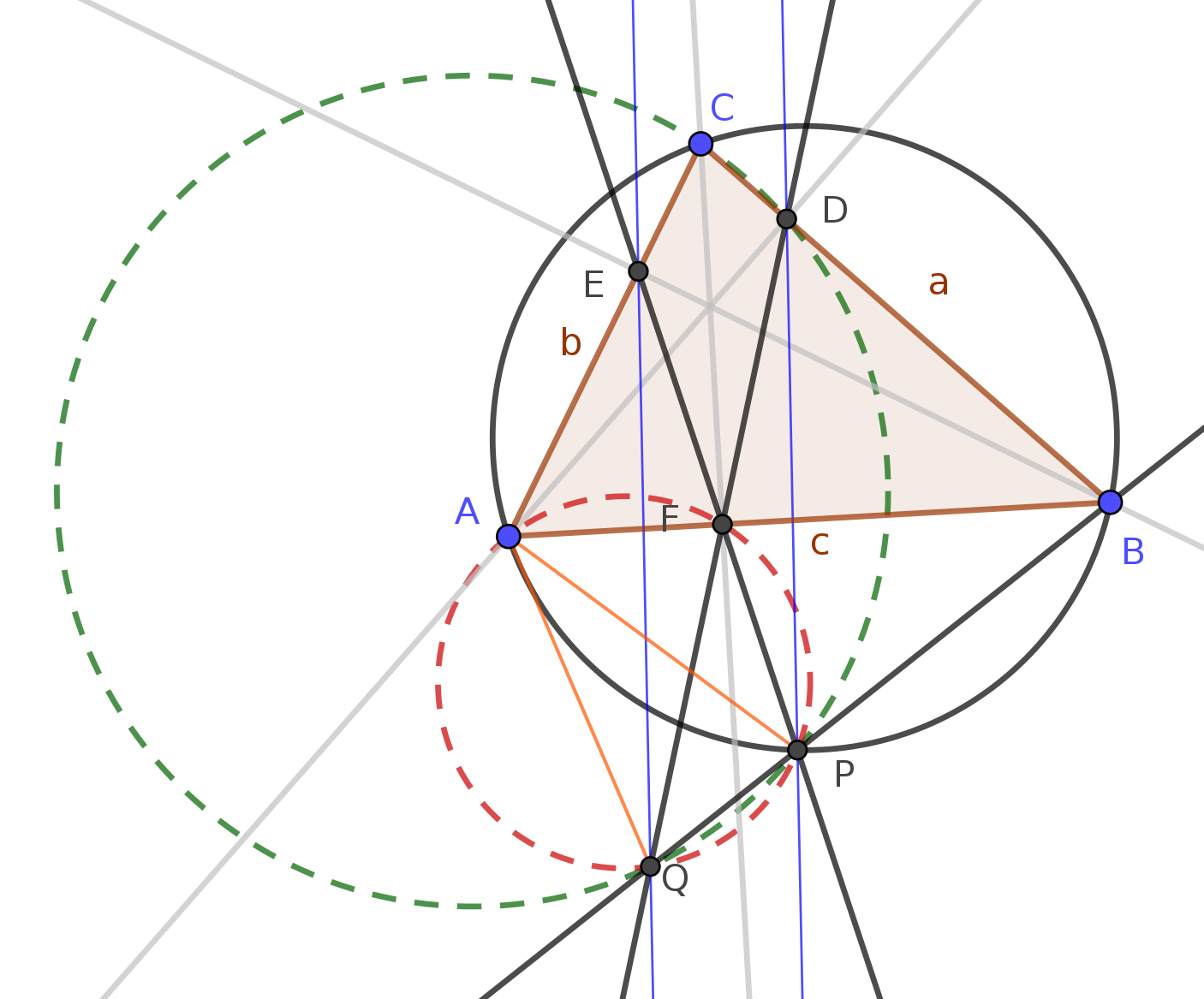}
{\setlength{\fboxsep}{0pt}%
\setlength{\fboxrule}{0.5pt}%
\fbox{\includegraphics[scale=0.4]{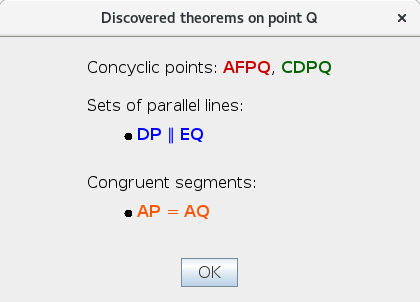}}%
}
\caption{Output of the command \texttt{Discover($Q$)}}
\label{contest2}
\end{figure}

\subsection{Pappus's hexagon theorem}

Consider two sets of collinear triplets $A$, $B$, $E$;
and $C$, $D$, $F$. The intersection points $G=AD\cap BC$, $H=AF\cap CE$, $I=BF\cap DE$
are created. Pappus's hexagon theorem (Fig.~\ref{pappus1}) claims that
the points $G$, $H$ and $I$ are collinear (in general, after assuming certain non-degeneracy conditions).
\begin{figure}\centering
\includegraphics[scale=0.75]{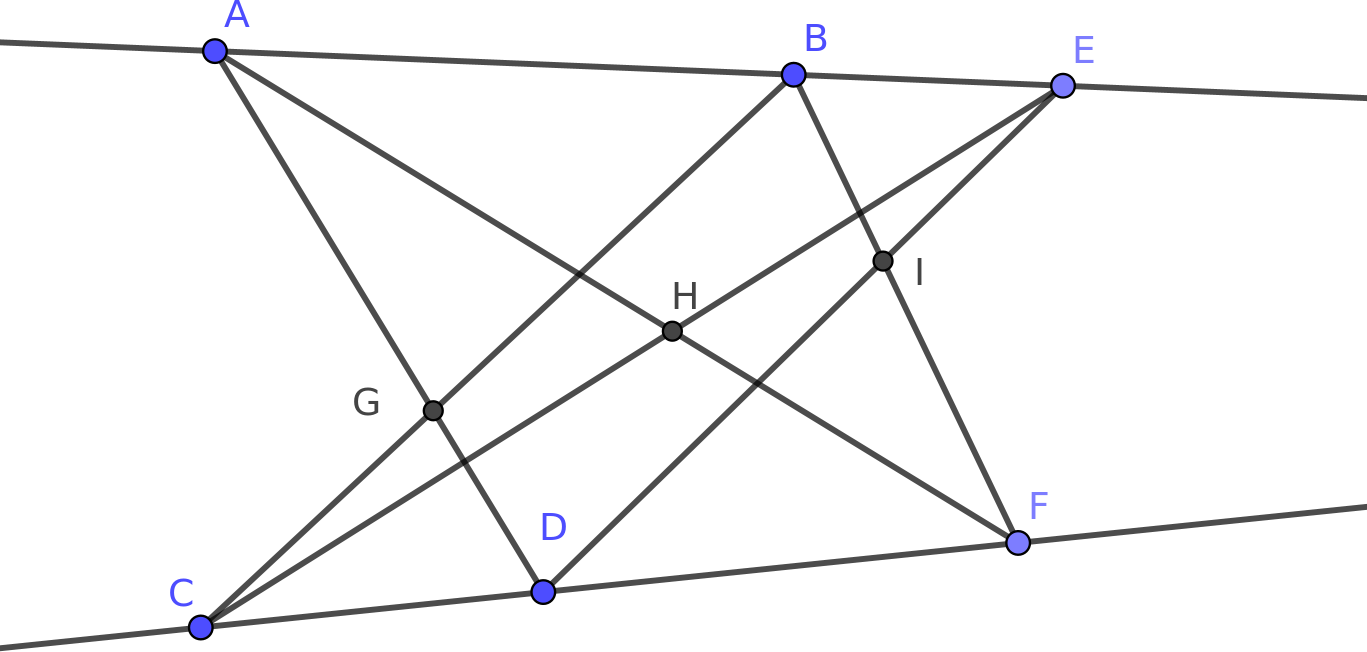}
\caption{Problem setting for Pappus's hexagon theoreom}
\label{pappus1}
\end{figure}
With discovery on point $G$, the theorem is reported in Fig.~\ref{pappus2}.
This final example is more commonly discussed at the university level, rather than in secondary school.
\begin{figure}\centering
\includegraphics[scale=0.75]{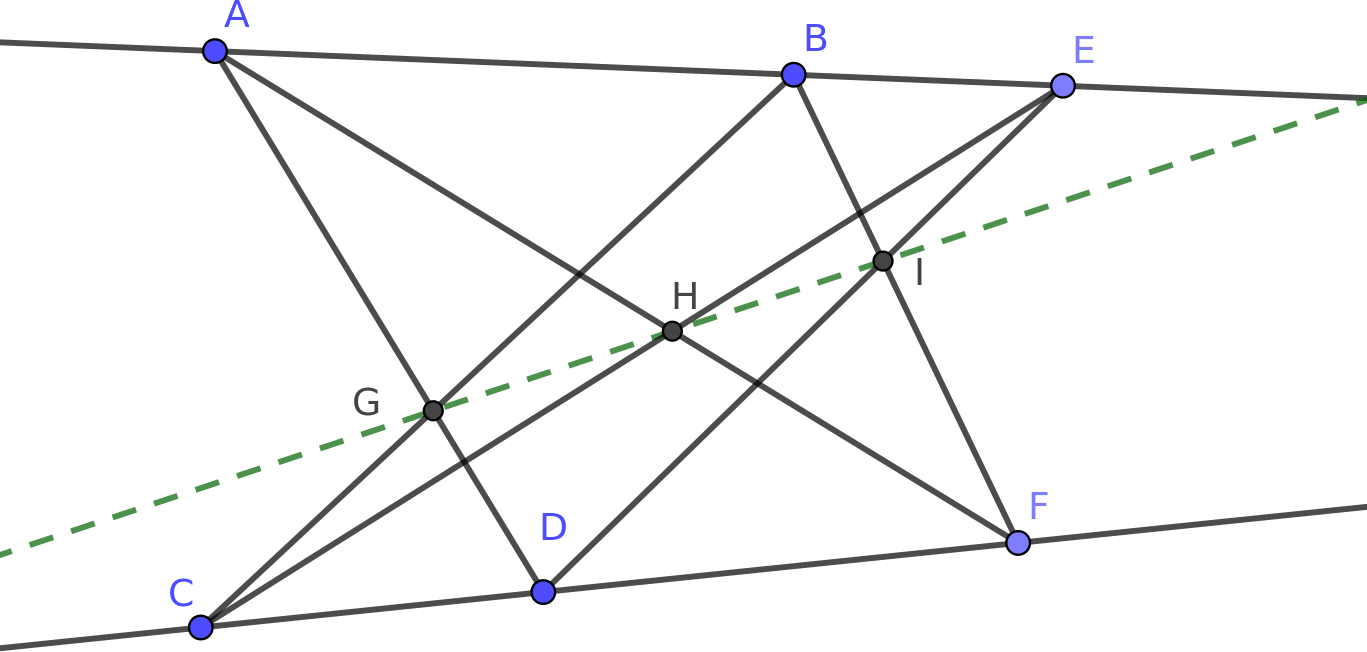}
{\setlength{\fboxsep}{0pt}%
\setlength{\fboxrule}{0.5pt}%
\fbox{\includegraphics[scale=0.4]{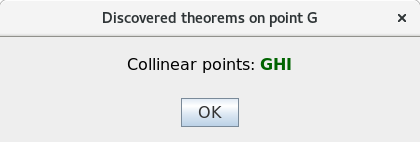}}%
}
\caption{Output of the command \texttt{Discover($G$)}}
\label{pappus2}
\end{figure}

\begin{center}
*
\end{center}
As a final note we highlight that the user interface for the geometric discovery
is designed to be easy for non-experts as well. One does not need to use anything
else but the mouse pointer to obtain all the information.

\section{Discussion}\label{sec4}

\subsection{Trivial statements and theorems}
In Fig.~\ref{rel-midline} the collinearity of points $B$, $C$ and $D$ and of points $A$, $C$ and $E$
were not reported. This is intentional:
by defining $D$ as the midpoint of $BC$ we implicitly assumed this collinearity,
so it does not make any sense to reiterate this. Therefore, it seems
useful to make a distinction between \textit{trivial statements} and \textit{theorems}.

The question of which properties are considered trivial or not is at some level a judgment call.
In Fig.~\ref{rel-midline} most users may regard the information
$BD=CD$ as trivial, with $D$ being the midpoint of $BC$. On the other hand,
for beginners this information may still be useful.

At the moment GeoGebra Discovery maintains some background information if the obtained
theorem is to be displayed or not. For example, in Fig.~\ref{parallelogram2}, the collinearity of
$P_2$, $P_4$ and $P_6$
is considered trivial and not displayed, but the fact that $P_5=P_6$ is presented as non-trivial.
By considering both of these ideas, the collinearity of $P_2$, $P_4$ and $P_5$
could be considered either trivial or non-trivial---currently it is considered as trivial and not shown.
The decision process for such questions should be clarified in the future.

%

\subsection{Combinatorial explosion and computational complexity}
Despite the large number of possible statements, the combinatorial complexity is still polynomial,
because from a given set of input objects $P_1,P_2,\ldots,P_n$ we need to select just at most four objects (four objects are required to confirm concyclicity.) On the other hand,
by using the classes of the equivalence relations, the number of statements to be checked can be decreased significantly.

For each possible statement, a numerical check is first performed. We assume that this is always
successful when a generally true statement is about to check. Unfortunately, in reality this is not always the case,
because for some exotic coordinates, the numerical check can be completely misleading.
For example, some very large numbers can result in numerically unstable computations.
Regardless, if a numerical check is positive, then the statement is added
to the list of conjectures, but if it is negative, no conjecture is registered.
As a consequence, while our implementation
may miss some true statements (due to numerical errors), it will not output false statements.

For each conjecture, a symbolic check will be performed. If the symbolic check is positive,
then the statement will be saved as a theorem. If the symbolic check is negative, then the statement will be
removed from the list of conjectures. If the symbolic check cannot decide if a conjecture is
true or false, the conjecture is removed from the list.

A special case of a conjecture is $P_1=P_2$ for each two geometric points. If this conjecture cannot be
proven or disproven symbolically, then the discovery process will be completely stopped and the user will be notified
that the construction must be redrawn in a different way---otherwise no output can be produced. This
exception is required to keep the internal data consistent.

Symbolic checks usually require more time than numerical verifications. The underlying computation
uses Gr\"obner bases that require at most double exponential time of the number of variables \cite{mayrmeyer82} according
to the given figure. Usually, the number of variables are double the number of
geometric points in the figure (since there are two coordinates for each).

GeoGebra internally sets 5 seconds for the maximal execution time of each symbolic check. After timeout
the result of the symbolic check will be undecided.

\subsection{User interface enhancements}

GeoGebra is designed with a straightforward user interface that asks the users
no questions if possible. However, its usability could be improved for situation when
the user wishes to limit the output by filtering or excluding certain relationships.

Currently only points can be investigated. In a future version a set of points, segments,
lines, circles or a set of these should be permitted as input.

Currently the computation process cannot be interrupted by the user. Given a large number of points
in the figure, the calculation can be time consuming. For example, investigating the relationships
of a regular 20-gon may require about 4 minutes on a modern personal computer (in our test
a Lenovo ThinkPad E480 with 8$\times$i7, 16 GB RAM, Ubuntu Linux 18.04, was used). See Fig.~\ref{20gon} for the output.

\begin{figure}\centering
\includegraphics[scale=0.4]{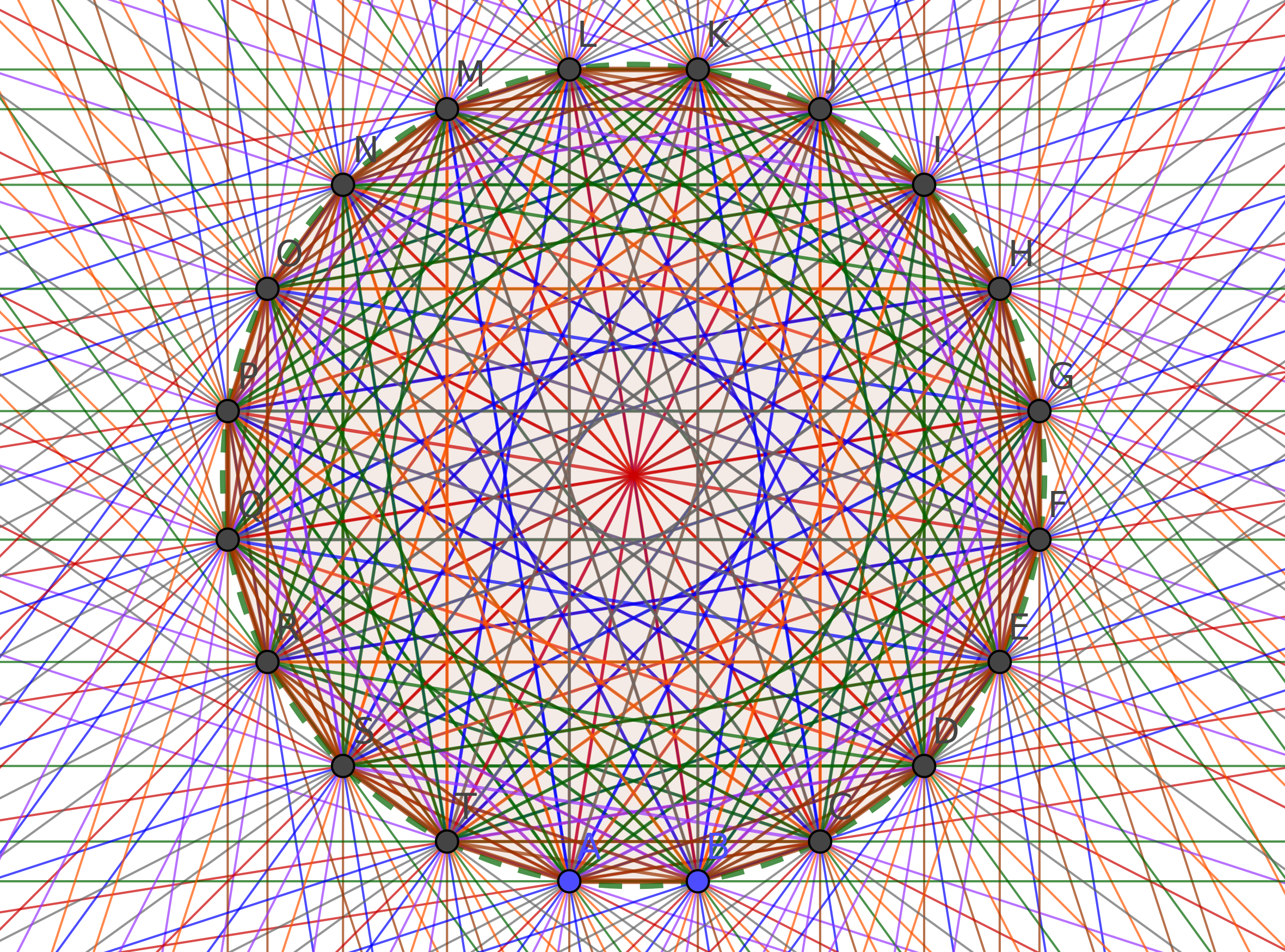}\\
{\setlength{\fboxsep}{0pt}%
\setlength{\fboxrule}{0.5pt}%
\fbox{\includegraphics[width=\textwidth]{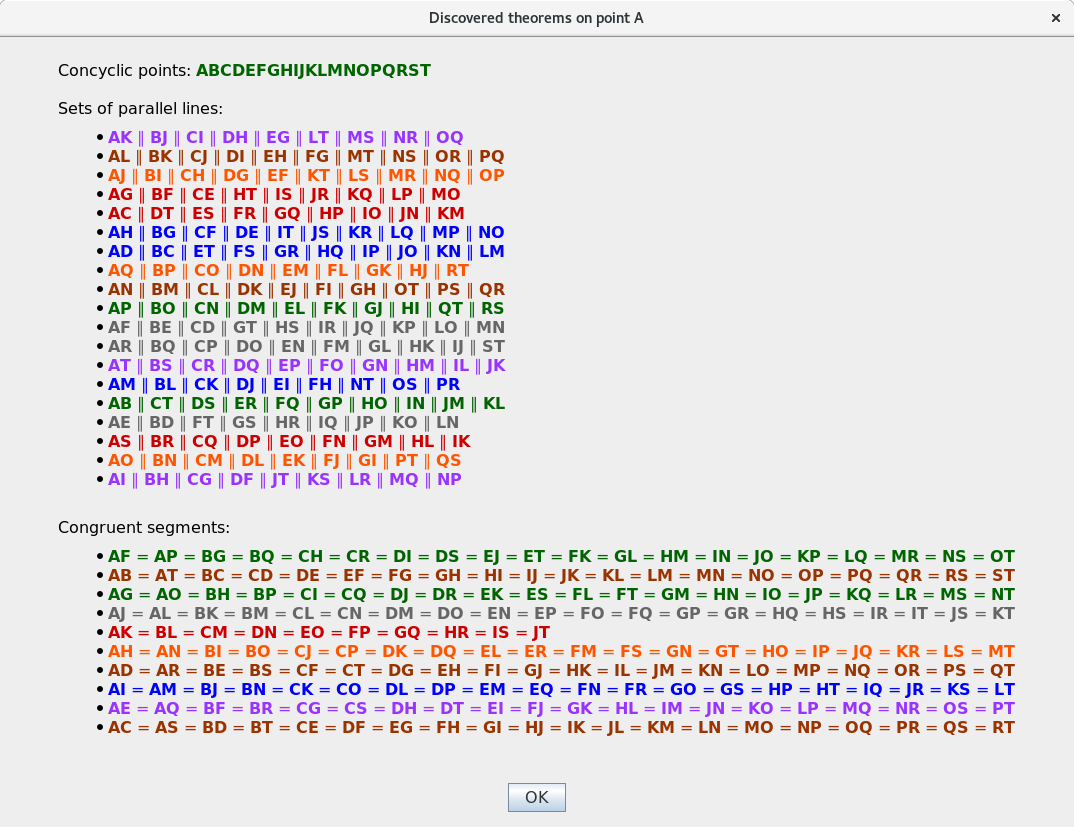}}%
}
\caption{Output for \texttt{Discover($A$)} in a regular 20-gon}
\label{20gon}
\end{figure}

The version that is based on GeoGebra Classic 5 performs better than the one on Classic 6---the
latter is a web implementation of the GeoGebra application and uses a WebAssembly compilation
of the computer algebra system Giac. Even if the code is reasonably fast as embedded code
in a web page, this latter version underperforms the native technology: the same hardware
is unable to handle the input of the regular 20-gon, and the browser tab crashes
after 12 minutes of computation. (Google Chrome 83 was used for testing.)

\subsection{Colors}
At the moment a limited set of colors is used to highlight parallelism and congruence.
In the future a pre-defined sequence of distinguishable colors should be added
to GeoGebra Discovery---for example, at the moment in Fig.~\ref{rel-hexagon} the same black color is used to 
highlight different sets of parallel lines.

\subsection{Perpendicular lines}
Perpendicular lines play an important role in elementary planar geometry. Their detection and
presentation are not yet implemented in GeoGebra Discovery.
Here we mention that the relationship of perpendicularity
is \textit{not} an equivalence, in contrast to the previous relationships defined in Section \ref{sec2}.
On the other hand,
if $\vec{D}$ and $\vec{E}$ are directions, if $\ell\in\vec{D}$ and $m\in\vec{E}$, the relationship
$\ell\perp m$ implies perpendicularity for all $\ell'\in\vec{D}$ and $m'\in\vec{E}$, that is, $\ell'\perp m'$.

It seems convenient to color perpendicular lines with the same color. So a rectangular grid can be
observed for each pair of directions $\vec{D}$ and $\vec{E}$ whose representative lines are
perpendicular, accordingly. Fig.~\ref{grid} shows an example that includes four rectangular grids
for the parallel diagonals of a regular octagon.

\begin{figure}\centering
\includegraphics[scale=0.75]{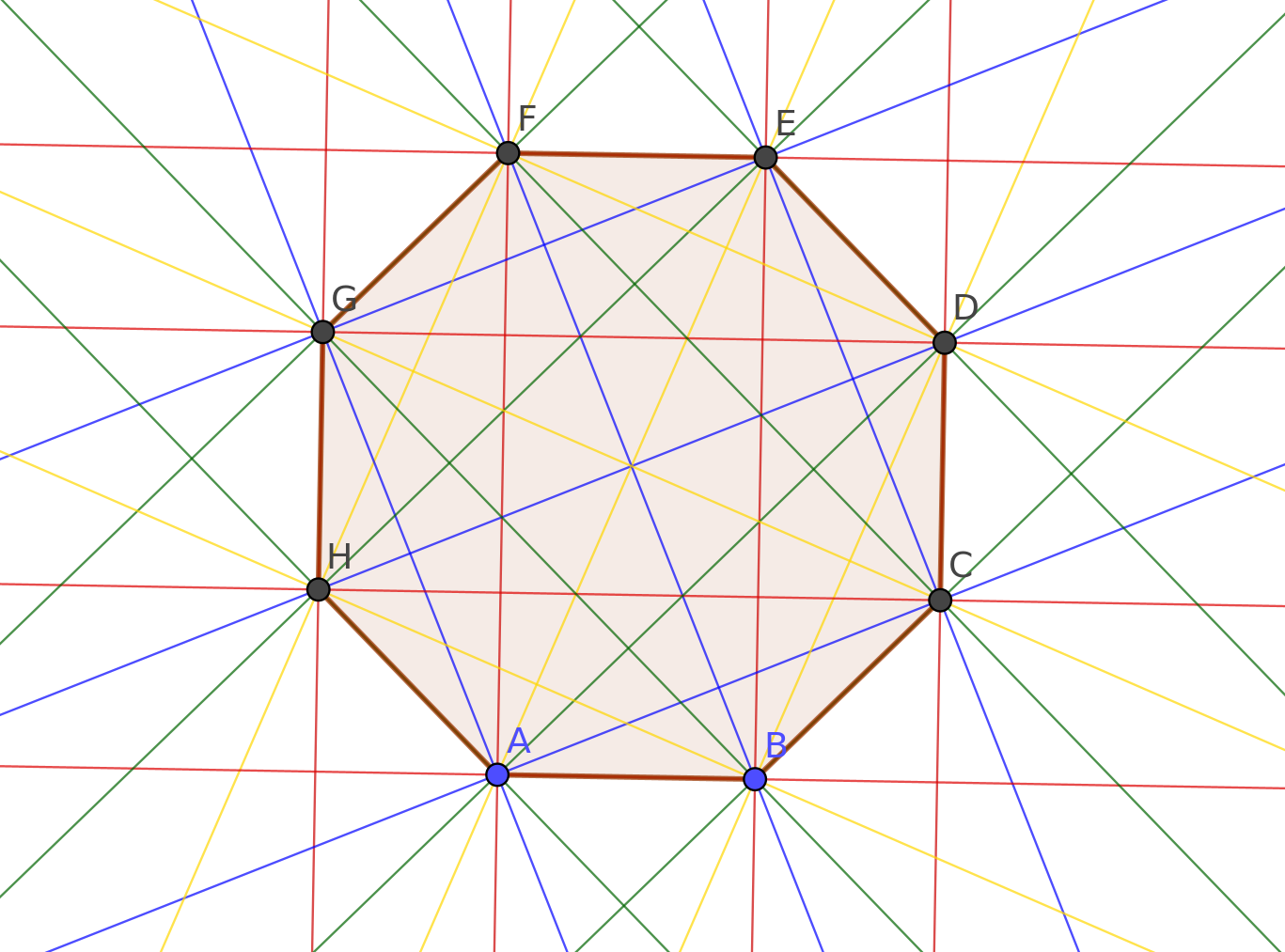}\\
\caption{Four rectangular grids describing the parallel diagonals of a regular octagon}
\label{grid}
\end{figure}

\subsection{Angles}
In a complex algebraic geometry setting, the study of angles is not as straightforward as investigating other objects.
For a future version, however, this feature would be an important improvement.

By combining algebraic and pure geometric observations, however, simple theorems on angle equality could
be easily detected. For example, Fig.~\ref{contest2} states that points $A$, $F$, $P$, $Q$ are concyclic.
The inscribed angle theorem automatically implies $\angle QAP=\angle QFP$, among others.

\subsection{Stepwise suggestions}

Prior research (see \cite[p.~46]{matematech}) proposed that collecting the interesting new objects
in a figure could be done stepwise, similarly to GeoGebra's former feature ``special objects.''
For our midline theorem example (Fig.~\ref{midline1}), this meant that after constructing the triangle $ABC$, and then
midpoint $D$, the segments $BD$ and $CD$ were automatically shown by the system. The user could then accept these newly generated segments or remove them from the system.
Then, by creating midpoint $E$, the system could show lines $AB$ and $DE$ to visualize parallelism.

Actually, the ``special objects'' feature was recently removed from GeoGebra after some negative feedback from
the community---many users found this feature confusing.
As a consequence, adding stepwise suggestions in GeoGebra Discovery remains a question for future research.

\subsection{Benchmarks}

There is no benchmarking suite for the \texttt{Discover} command yet. This should be addressed
in the next phase of the development.

\section{Related work}

We now discuss several projects that share some similarity to GeoGebra Discover but differ significantly
in meaningful ways.

First of all, GeoGebra Discovery is not the first tool that systematically displays confirmed theorems in a geometric
figure. We refer the reader to 
\begin{itemize}
\item Zlatan Magajna's \textit{OK Geometry} \cite{Magajna2011}
(available at \url{www.ok-geometry.com})
and
\item Jacques Gressier's \textit{G\'eom\'etrix}
(available at \url{geometrix.free.fr}).
\end{itemize}
These systems are available free of charge, but without the source code.
On the other hand, GeoGebra Discovery focuses on an intuitive user interface
and proofs in the most mathematical sense.

Second, we highlight that there is a growing interest in creating algorithms related to success completion of 
secondary school or undergraduate mathematics entrance exams. (See \cite{robot-china}, \cite{text-diagram}, \cite{japan-todai}, among others.)
Sometimes these projects rely significantly on techniques used in the underlying computational methods.
Also, these projects are often related to artificial intelligence and Big Data rather than to
computational mathematics.

Third, we mention a theoretical issue. The idea to store a geometric point only once if it is identical to another one was previously described in Kortenkamp's work \cite[9.3.1]{ulli99}. This 
concept is a main design element in the dynamic geometry software Cinderella, which
never stores a geometric point twice if the two variants are identical in general.

GeoGebra has a different design concept by allowing the user an arbitrary number of identical points to
be defined. From the theorem prover's point of view, GeoGebra's concept is more difficult to handle,
and a kind of translation is required to have a different data structure by using the
concepts from Section \ref{sec2}.

Also, we note that GeoGebra Discovery proves the truth in a different manner from Cinderella,
with Cinderella using a probabilistic method,
and GeoGebra Discovery literally \textit{proving} all the deduced facts.

\section{Conclusion}

We described a prototype of the \texttt{Discover} command that is available in an experimental
version of GeoGebra, called GeoGebra Discovery. Our current implementation can be directly
downloaded from \url{https://github.com/kovzol/geogebra/releases/tag/v5.0.591.0-2020Jul16}.

Our work is still in progress, as noted with the issues listed in Section \ref{sec4}.

\section{Acknowledgments}

The \texttt{Discover} command is a result of a long collaboration of several researchers.
The project was initiated by Tom\'as Recio in 2010, and several other researchers joined,
including Francisco Botana and M.~Pilar V\'elez, to name just the most prominent collaborators.
The development and research work was continuously monitored and supported by the GeoGebra Team.
Special thanks to Markus Hohenwarter, project director of GeoGebra.

The work was partially supported by a grant MTM2017-88796-P from the
Spanish MINECO (Ministerio de Economia y Competitividad) and the ERDF
(European Regional Development Fund).

\bibliography{kovzol,external}

\begin{thebibliography}{10}

\bibitem{geogebra-discovery}
Kov\'acs, Z.:
\newblock {GeoGebra} {Discovery}.
\newblock A GitHub project (2020)
  \url{https://github.com/kovzol/geogebra-discovery}.

\bibitem{ag}
Botana, F., Kov\'acs, Z., Recio, T.:
\newblock Automated {G}eometer.
\newblock A GitHub project (2018) \url{https://github.com/kovzol/ag}.

\bibitem{adg-ag}
Botana, F., Kov\'acs, Z., Recio, T.:
\newblock Automated {G}eometer, a web-based discovery tool.
\newblock Presentation at ADG-12, Nanning, China (2018)

\bibitem{aisc-ag}
Botana, F., Kov\'acs, Z., Recio, T.:
\newblock Towards an automated geometer.
\newblock Presentation at AISC-13, Suzhou, China (2018)

\bibitem{LNAI11110-ag}
Botana, F., Kov\'acs, Z., Recio, T.:
\newblock Towards an automated geometer.
\newblock In Fleuriot, J., Wang, D., Calmet, J., eds.: Artificial Intelligence
  and Symbolic Computation. Volume 11110 of Lecture Notes in Artificial
  Intelligence., Springer International Publishing (2018)  215--220

\bibitem{song}
Chen, X., Song, D., Wang, D.:
\newblock Automated generation of geometric theorems from images of diagrams.
\newblock Annals of Mathematics and Artificial Intelligence \textbf{74} (2015)
  333--358

\bibitem{Chou_1987}
Chou, S.C.:
\newblock {Mechanical Geometry Theorem Proving}.
\newblock Springer Science $+$ Business Media (1987)

\bibitem{rmc-top}
Kov\'acs, Z., Recio, T., V\'elez, M.P.:
\newblock Detecting truth, just on parts.
\newblock Revista Matem\'atica Complutense \textbf{32} (2019)  451--474

\bibitem{mayrmeyer82}
Mayr, E., Meyer, A.:
\newblock The complexity of the word problem for commutative semigroups and
  polynomial ideals.
\newblock Advances in Mathematics \textbf{46} (1982)  305--329

\bibitem{matematech}
Kov{\'a}cs, Z.:
\newblock Towards a new {GeoGebra} {Geometry} {App}.
\newblock Presentation at MatemaTech Seminar for teachers, \v{C}esk\'e
  Bud\v{e}jovice, Czechia (2019)

\bibitem{Magajna2011}
Magajna, Z.:
\newblock An observation tool as an aid for building proofs.
\newblock The Electronic Journal of Mathematics and Technology \textbf{5}
  (2011)  251--260

\bibitem{robot-china}
Fu, H., Zhang, J., Zhong, X., Zha, M., Liu, L.:
\newblock Robot for mathematics college entrance examination.
\newblock In: Electronic Proceedings of the 24th Asian Technology Conference in
  Mathematics, Mathematics and Technology, LLC (2019)

\bibitem{text-diagram}
Seo, M., Hajishirzi, H., Farhadi, A., Etzioni, O., Malcolm, C.:
\newblock Solving geometry problems: Combining text and diagram interpretation.
\newblock In: Proceedings of the 2015 Conference on Empirical Methods in
  Natural Language Processing. (2015)  1466--1476

\bibitem{japan-todai}
Fujita, A., Kameda, A., Kawazoe, A., Miyao, Y.:
\newblock Overview of {T}odai robot project and evaluation framework of its
  {NLP}-based problem solving.
\newblock In: Proceedings of the Ninth International Conference on Language
  Resources and Evaluation (LREC'14). (2014)  2590--2597

\bibitem{ulli99}
Kortenkamp, U.:
\newblock {Foundations of Dynamic Geometry}.
\newblock PhD thesis, ETH Z\"urich (1999)

\end{thebibliography}

\end{document}